\documentclass[akbc,twoside,11pt,lettersize]{article}
\usepackage{akbc}
\akbcheading
\finalcopy 

\usepackage{subcaption}
\usepackage{wrapfig}

\usepackage{amsthm}
\usepackage{algorithm}
\usepackage{algorithmic}
\usepackage{array}
\usepackage{setspace}
\usepackage{float}
\usepackage{url}
\usepackage{balance}
\usepackage{setspace}
\usepackage{amsmath}
\usepackage{enumitem}
\usepackage{makecell}
\usepackage{multirow}
\usepackage{color,xcolor}
\usepackage{graphicx}
\usepackage{breakcites}
\usepackage{xspace}
\usepackage{xcolor}
\usepackage{amssymb}
\usepackage{pifont}
\usepackage{tabulary}
\usepackage{arydshln}
\usepackage{bm}
    
\usepackage{tabularx, booktabs}
\usepackage{diagbox}
\newcolumntype{Y}{>{\centering\arraybackslash}X}
\newcolumntype{L}[1]{>{\raggedright\let\newline\\\arraybackslash\hspace{0pt}}m{#1}}
\newcolumntype{C}[1]{>{\centering\let\newline\\\arraybackslash\hspace{0pt}}m{#1}}
\newcolumntype{R}[1]{>{\raggedleft\let\newline\\\arraybackslash\hspace{0pt}}m{#1}}

\begin{document}
\title{Partially-Typed NER Datasets Integration: \\Connecting Practice to Theory}

\ShortHeadings{Partially-Typed NER Datasets Integration: \\Connecting Practice to Theory}{}

\author{\small
Shi Zhi\textsuperscript{$\ddagger$}\thanks{Equal Contributions.}\;\,\,
Liyuan Liu\textsuperscript{$\ddagger$}$^{*}$\,
Yu Zhang\textsuperscript{$\ddagger$}\,
Shiyin Wang\textsuperscript{$\ddagger$}\,
Qi Li\textsuperscript{$\mathsection$}\,
Chao Zhang\textsuperscript{$\dagger$}\,
Jiawei Han\textsuperscript{$\ddagger$}\\
\textsuperscript{$\ddagger$}{\footnotesize University of Illinois at Urbana-Champaign}, \texttt{\footnotesize \{shizhi2 ll2 yuz9 shiyinw hanj\}@illinois.edu}\\
\textsuperscript{$\mathsection$}{\footnotesize Iowa State University}, \texttt{\footnotesize{qli@iastate.edu}}\\
\textsuperscript{$\dagger$}{\footnotesize Georgia Tech}, \texttt{\footnotesize{chaozhang@gatech.edu}}}

\newcommand{\ours}{EvolvT}
\newcommand{\vsafc}{\vspace{-0.5cm}}
\newcommand{\vsaft}{\vspace{-0.5cm}}
\newcommand{\bs}{\boldsymbol}
\newcommand{\mb}{\mathbf}
\newcommand{\comments}[1]{\textcolor{red}{#1}}

\newcommand{\smallsection}[1]{{\noindent \textbf{#1.}}}

\def \eg {\textit{e.g.}}
\def \ie {\textit{i.e.}}

\def \D {\mathcal{D}}
\def \T {\mathcal{T}}

\def \b {\mathbf{b}}
\def \c {\mathbf{c}}
\def \f {\mathbf{f}}
\def \g {\mathbf{g}}
\def \h {\mathbf{h}}
\def \o {\mathbf{o}}
\def \p {\mathbf{p}}
\def \r {\mathbf{r}}
\def \v {\mathbf{v}}
\def \w {\mathbf{w}}
\def \x {\mathbf{x}}
\def \y {\mathbf{y}}

\def \F {\mathbf{F}}
\def \G {\mathbf{G}}
\def \X {\mathbf{X}}
\def \Y {\mathbf{Y}}

\newtheorem{exmp}{Example}
\newtheorem{theo}{Theorem}
\newtheorem{lemm}{Lemma}

\maketitle

       
\begin{abstract}
While typical named entity recognition (NER) models require the training set to be annotated with all target types, each available datasets may only cover a part of them. 
Instead of relying on fully-typed NER datasets, many efforts have been made to leverage multiple partially-typed ones for training and allow the resulting model to cover a full type set. 
However, there is neither guarantee on the quality of integrated datasets, nor guidance on the design of training algorithms. 
Here, we conduct a systematic analysis and comparison between partially-typed NER datasets and fully-typed ones, in both theoretical and empirical manner.  
Firstly, we derive a bound to establish that models trained with partially-typed annotations can reach a similar performance with the ones trained with fully-typed annotations, which also provides guidance on the algorithm design. 
Moreover, we conduct controlled experiments, which shows partially-typed datasets leads to similar performance with the model trained with the same amount of fully-typed annotations.
\end{abstract}

\section{Intorduction}

\begin{wrapfigure}{r}{0.5\textwidth} 
\vspace{-0.5cm}
\centering
\includegraphics[width=0.5\textwidth]{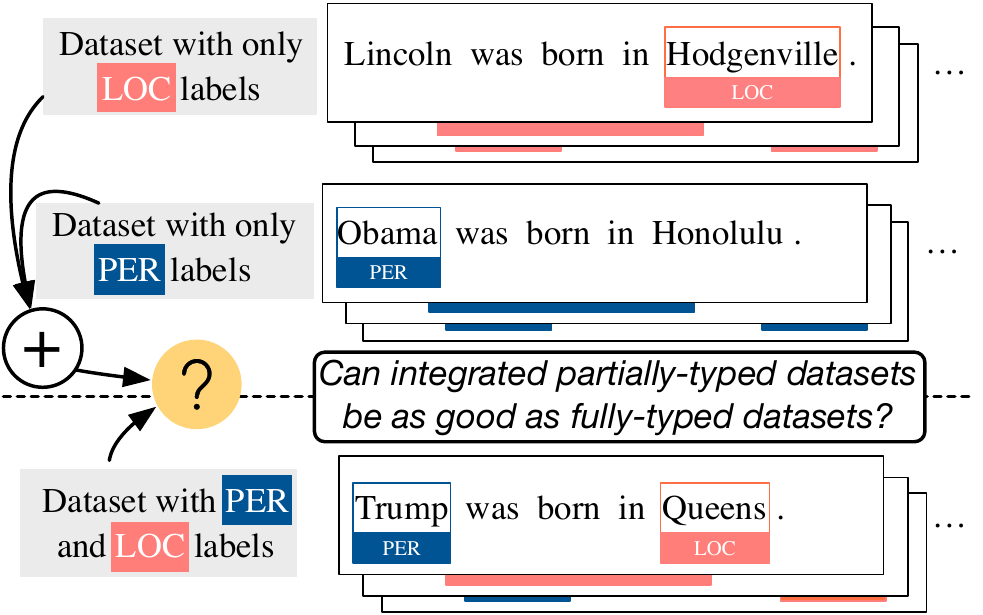}
\caption{Our goal is to understand whether the potential of the integrated partially-typed NER datasets is as good as fully-typed one.}\label{fig:problem_setting}
\vspace{-0.5cm}
\end{wrapfigure}

Named entity recognition (NER) aims to identify entities in natural language and serves as the first step in various applications such as information extraction~\cite{bunescu2005shortest} and information retrieval~\cite{guo2009named}.
While traditional methods require one corpus to be annotated with all target types, 
fulfill such requirements is costly and time-consuming, especially for domains requiring experts to provide annotations.
At the same time, there may exist multiple datasets where each covers a different subset of target types.
For example, most biomedical NER datasets only contain one or two types per dataset, while real-world applications require NER models covering all entity types. 
Intuitively, these datasets have complementary information to each other, and learning across them allows us to cover all target types.
Despite the great potential of this learning paradigm, it brings us new challenges, \ie, how to design an algorithm to learn across datasets and how good the resulting model can be. 

Annotations from different partially-typed datasets are not only incomplete but heterogeneous (different datasets are annotated to different label space). 
Therefore, special handling is required to integrate these datasets. 
For example, in Figure~\ref{fig:problem_setting}, only LOC entities are annotated in the first dataset and only PER entities are labeled in the second dataset.
Although combining them together provides us annotations for both types of entities, directly merging them together without handling their difference would result in massive label noise (\eg, it teaches the model to annotate \textit{Lincoln} and other person names as none-entity in the first dataset).
Many efforts have been made to mediate different datasets and handle their disparity~\cite{scheffer2001active,mukherjee2004taming,bui2008learning,yu2009learning,fernandes2011learning,greenberg2018marginal,jie2019better,beryozkin2019joint,huang2019learning}.
For example, \citet{greenberg2018marginal} proposes to combine EM marginalization and conditional random field (CRF), which significantly outperforms its directly merging baseline and verifies the need of integration algorithm. 
However, due to the lack of theory describing the underlying mechanisms and assumptions of the desired integration, it is still not clear on how dataset integration should be conducted, 
or whether integrated partially-typed datasets can be as good as fully-typed datasets.

Here, we conduct systematic analysis and comparison between partially-typed NER datasets and fully-typed NER datasets. 
Our main contributions are summarized as
follows.
\begin{itemize}[leftmargin=*]

    \item We conduct theoretical analysis on the potential of partially-typed datasets. 
    We establish that, under certain conditions, model performance of detecting fully-typed entities is bounded by the performance of detecting partially-typed entities.
    Thus, with partially-typed annotations, the algorithm is able to optimize both a upper bound and a lower bound of the fully-typed performance. 
    Our results not only reveal partially-typed datasets have a similar potential with fully-typed ones but shed insights on the algorithm design. 
    \item We further designed controlled experiments to verify our theoretical analysis. 
    Specifically, we generated simulated partially-typed and fully-typed datasets, implemented various heuristic methods for dataset integration. 
    With the same amount of annotated entities, models trained with partially-typed datasets achieve the comparable performance with ones trained with fully-typed datasets. 
    It matches our theoretical results and verifies the potential of partially-type datasets. 
\end{itemize}

\section{Partilly-Typed Dataset Analysis}
\label{sec:theory}

Here, we conduct analysis on the potential of the partially-typed annotation.
Specifically, we aim to understand whether partially-typed datasets can achieve a similar performance with fully-typed datasets.  
We first describe our problem setting, briefly introduce sequence labeling models, and then proceed to our theoretical results. 

\subsection{Problem Setting}

We consider a collection of annotated NER corpora
$\{\D_1, \cdots, \D_N\}$, where $\D_i$ is a corpus with a pre-defined target type set, 
denoted by $\T_i$.
In other words, for $\D_i$, all entities within $\T_i$ are labeled with their types and other words are labelled with \texttt{O}, \ie, not-target-type.

We aim to leverage all datasets $\{\D_1, \cdots, \D_N\}$ to train a unified named entity recognizer which can extract entities of any type in 
$\T_{all} = \T_1 \cup \T_2 \cup \cdots \cup \T_N$.
Formally, we define our task as: 
constructing a single model with $\{\D_1, \cdots, \D_N\}$ to annotate all entities within the complete type set $\T_{all}$ 
and all other words as \texttt{O}.

\subsection{Sequence Labeling}
\label{subsec:seq_learn}


Recently, most NER systems are characterized by sequence labeling models under labeling schemas like \texttt{IOB} and \texttt{IOBES}~\cite{ratinov2009design}.
In the \texttt{IOBES} schema, when a sequence of tokens is identified as a named entity, its starting, middle and ending tokens are labeled as \texttt{B-}, \texttt{I-}, \texttt{E-}, respectively, followed by the type.
If a single word is an entity, it is labeled as \texttt{S-} instead. 

To capture the dependency among labels, first-order CRF is widely used in existing NER systems and has crucial impact on the model performance~\cite{reimers2017optimal}. 
Specifically, for input sequence $X=\{x_1, x_2, \ldots, x_T\}$, most models calculate a representation for each token. 
We refer the representation sequence as  $\X = \{\x_1, \cdots, \x_T\}$, where $\x_i$ is the representation of $x_i$. 
For an input sequence $\X$, CRF defines the conditional probability of $\y = \{y_1, \cdots, y_T\}$ as 
\begin{equation}
p(\y|X) = \frac{\prod_{t=1}^{T} \phi(y_{t-1}, y_t, \x_t)}{\sum_{\hat{\y} \in \Y(X)} \prod_{t=1}^{T} \phi(\hat{y}_{t-1}, \hat{y}_t, \x_t)},
\label{eqn:crf}
\end{equation}
where $\hat{\y} = \{\hat{y}_1, \cdots, \hat{y}_T\}$ is a generic label sequence, $\Y(X)$ is the set of all generic label sequences for $\X$ and $\phi(y_{t-1}, y_t, \x_t)$ is the potential function. For computation efficiency, the potential function is usually defined as:
$$
\phi(y_{t-1}, y_t, \x_t) = \exp(W_{y_{t-1}, y_t} \x_t + b_{y_{t-1}, y_t}),
$$
where $W_{y_{t-1}, y_t}$ and $b_{y_{t-1}, y_t}$ are the weight and bias term to be learned.

The negative log-likelihood of Equation~\ref{eqn:crf} is treated as the loss function for model training; during inference, the optimal label sequence is found by maximizing the probability in Equation~\ref{eqn:crf}.
Although $|\Y(X)|$ is exponential w.r.t. the sentence length, both training and inference can be efficiently completed with dynamic programming algorithms.

\subsection{Potential of Partially-typed Datasets}

Intuitively, from multiple partially annotated corpora, we could obtain sufficient supervision for training w.r.t. the complete type set $\T_{all}$. 
Still, there is no guarantee about the model performance trained with such annotations.

Existing studies about generalization errors show that, model performance w.r.t. $\T_i$ is bounded by its performance on the corresponding training set $\D_i$~\cite{london2016stability}.
Here, we further establish the connection between model performance w.r.t. $\T_i$ and the performance w.r.t. $\T_{all}$.
In particular, since typical sequence labeling models treat sentences as the training unit, we use the sentence-level error rate~\cite{fernandes2011learning} to evaluate the performance w.r.t. $\T_{all}$.
\begin{align}
E_{all} = \mathbb{E}_{X, \y} [\mathbb{I}(\varphi(X, \y) \leq \max_{\hat{\y} \in \Y(X) - \y } \varphi(X, \hat{\y}))],
\label{eqn:e_all}
\end{align}
where $\varphi(X, \y)$ is the score calculated by the NER model for the input $X$ and label $\y$. 
Specifically, for the CRF model (Equation~\ref{eqn:crf}), $\varphi(X, \y)$ would be $\prod_{t=1}^{T} \phi(y_{t-1}, y_t, \x_t)$. 
$\mathbb{I}(.)$ is an error indicator function,
it equals to zero if its condition is false, \ie, the correct label sequence has the highest score; or equals to one otherwise.  

Next, we define the partial error rate, which only considers entity prediction within $\T_i$ and can be evaluated with the partially-typed dataset $\D_i$. 
Since only entities within $\T_i$ are evaluated,
predicted entities with other types need to be omitted. 
Thus, $\T_i(\y)$ is defined to replace all annotations in $\T_{all} - \T_i$ with \texttt{O} and convert $\y$ into a new label sequence.
Then, with regard to a type set $\T_i$, the target label sequence set can be defined as 
$$
\Y_{\T_i}(X, \y) = \{\hat{\y} | \hat{\y} \in \Y(X), \T_i(\hat{\y}) = \T_i(\y)\}
$$
It is worth mentioning that $\Y_{\T_i}(X, \y)$ would result in the same set of sequences no matter whether $\y$ is fully-typed or partially-typed, thus can be calculated with either fully-typed dataset or $\D_i$.
Based on $\Y_{\T_i}(X, \y)$, we define the sentence-level partial error rate w.r.t. $\T_i$ as:
\begin{align}
E_{\T_i} &= \mathbb{E}_{X, \y} [\mathbb{I}( \max_{\hat{\y} \in \Y_{\T_i}(X, \y)} \varphi(X, \hat{\y}) \leq \max_{\hat{\y} \in \Y(X) - \Y_{\T_i}(X, \y)} \varphi(X, \hat{\y}))]
\label{eqn:eti}
\end{align}
Specifically, the error indicator function $\mathbb{I}(.)$ in Equation~\ref{eqn:eti} would equal to zero if there exists a label sequence in $\Y_{\T_i}(X, \y)$ having the highest score among all possible label sequences $Y(X)$; otherwise, it would equal to one. 
Similar to $\Y_{\T_i}(X, \y)$, $E_{\T_i}$ 
would have the same value evaluated on fully-typed datasets and on the partially-typed dataset $\D_i$.
To simplify the notation, we use full annotations to calculate both $E_{\T_i}$ and $E_{all}$, then derive their connections as

\begin{theo}
\label{the:main}
If annotations w.r.t. $\{\T_1, \cdots, \T_N\}$ could determine annotations w.r.t. $\T_{all}$ without any ambiguity, \ie, 
for 
$\forall\, X \mbox{ and } \y, \{\y\} = \Y_{\T_1}(X, \y) \cap \cdots \cap \Y_{\T_N}(X, \y)$. 
We would have:
$$
\frac{\sum_{i = 1}^N E_{\T_i}}{N} \leq E_{all} \leq \sum_{i = 1}^N E_{\T_i}
$$
\end{theo}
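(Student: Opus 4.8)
The plan is to reduce the statement about expectations to a pair of pointwise inequalities between indicator functions, and then integrate. Writing $A$ for the full-error event and $B_i$ for the $i$-th partial-error event, namely
\[ A:\ \varphi(X,\y) \le \max_{\hat\y \in \Y(X) - \y} \varphi(X,\hat\y), \qquad B_i:\ \max_{\hat\y \in \Y_{\T_i}(X,\y)} \varphi(X,\hat\y) \le \max_{\hat\y \in \Y(X) - \Y_{\T_i}(X,\y)} \varphi(X,\hat\y), \]
we have $E_{all} = \mathbb{E}_{X,\y}[\mathbb{I}(A)]$ and $E_{\T_i} = \mathbb{E}_{X,\y}[\mathbb{I}(B_i)]$ over the same distribution on $(X,\y)$. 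Hence it suffices to establish, for every fixed $(X,\y)$,
\[ \frac{1}{N}\sum_{i=1}^N \mathbb{I}(B_i) \;\le\; \mathbb{I}(A) \;\le\; \sum_{i=1}^N \mathbb{I}(B_i), \]
after which the theorem follows by monotonicity and linearity of expectation. Throughout I will lean on two elementary facts: $\y \in \Y_{\T_i}(X,\y)$ for every $i$ (so the ``inside'' maximum is always at least $\varphi(X,\y)$), and $\Y(X)$ is finite, so every maximum is attained and strict inequalities propagate cleanly.

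For the upper bound I would argue the contrapositive: if all $B_i$ are false, then $A$ is false. Falsity of $B_i$ means $\max_{\hat\y \in \Y_{\T_i}(X,\y)} \varphi > \max_{\hat\y \notin \Y_{\T_i}(X,\y)} \varphi$, so the global maximum of $\varphi$ over $\Y(X)$ equals the inside maximum and every global maximizer must lie in $\Y_{\T_i}(X,\y)$. If this holds for all $i$ simultaneously, any global maximizer lies in $\bigcap_{i=1}^N \Y_{\T_i}(X,\y)$, which the disambiguation hypothesis forces to be exactly $\{\y\}$. Thus $\y$ is the unique global maximizer, giving $\varphi(X,\y) > \varphi(X,\hat\y)$ for all $\hat\y \ne \y$, i.e. $A$ is false. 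Contrapositively, whenever $A$ holds at least one $B_i$ holds, so $\mathbb{I}(A) \le \sum_i \mathbb{I}(B_i)$.

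For the lower bound I would instead verify the pointwise implication ``$B_i \Rightarrow A$'' for each fixed $i$. If $B_i$ holds, there is $\hat\y \in \Y(X) - \Y_{\T_i}(X,\y)$ with $\varphi(X,\hat\y) \ge \max_{\hat\y' \in \Y_{\T_i}(X,\y)} \varphi(X,\hat\y') \ge \varphi(X,\y)$, the last step using $\y \in \Y_{\T_i}(X,\y)$. Since $\y$ itself lies in $\Y_{\T_i}(X,\y)$, this $\hat\y$ is necessarily distinct from $\y$, so it witnesses $A$. Therefore $\mathbb{I}(B_i) \le \mathbb{I}(A)$ for each $i$, and summing the $N$ terms gives $\sum_i \mathbb{I}(B_i) \le N\,\mathbb{I}(A)$, which is the claimed lower bound. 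Notice this direction never uses the disambiguation hypothesis; that assumption is needed only for the upper bound.

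The main obstacle is the careful handling of ties and strictness in the upper bound: the error indicators are defined with ``$\le$'', so I must ensure that ``all $B_i$ false'' upgrades to $\y$ being the \emph{strict} unique maximizer rather than merely one maximizer among several before concluding $A$ is false. The argument resolves this precisely because falsity of $B_i$ produces a \emph{strict} gap between the inside and outside maxima; that gap pushes every maximizer into each $\Y_{\T_i}(X,\y)$ and hence into the singleton $\{\y\}$, and finiteness of $\Y(X)$ is what lets the strict separation propagate to $\varphi(X,\y) > \varphi(X,\hat\y)$ for every competitor $\hat\y \ne \y$.
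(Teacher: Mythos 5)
Your proof is correct and takes essentially the same route as the paper: the lower bound follows from the pointwise implication $B_i \Rightarrow A$ using $\y \in \Y_{\T_i}(X,\y)$, and the upper bound uses the disambiguation hypothesis to force every global maximizer into $\Y_{\T_1}(X,\y) \cap \cdots \cap \Y_{\T_N}(X,\y) = \{\y\}$ once all partial errors vanish. The only cosmetic difference is that the paper packages the second step as a separate lemma (the pointwise identity $\delta(X,\y) = \max_i \delta_{\T_i}(X,\y)$) proved by contradiction, whereas you argue the contrapositive directly; the substance, including the careful treatment of strict versus non-strict inequalities, is the same.
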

Theory~\ref{the:main} shows that, by training model with partially-typed datasets, an algorithm can optimize both the lower bound and the upper bound of the performance for recognizing fully-typed entities. 

To prove this theory, we first introduce a lemma.
Specifically, 
We denote the prediction error as:
$$
\delta(X, \y) = \mathbb{I}(\varphi(X, \y) \leq \max_{\hat{\y} \in \Y(X) - \y } \varphi(X, \hat{\y})).
$$
In this way, we have 
$
E_{all} =  \mathbb{E}_{X, \y} [\delta(X, \y)].
$
Similarly, we define $\delta_{\T_i}(X, \y)$, the prediction error within $\T_i$ as
\begin{align*}
\mathbb{I}( \max_{\hat{\y} \in \Y_{\T_i}(X, \y)} \varphi(X, \hat{\y})
\leq \max_{\hat{\y} \in \Y(X) - \Y_{\T_i}(X, \y)} \varphi(X, \hat{\y})),
\end{align*}
and get $E_{\T_i} = \mathbb{E}_{X, \y} [\delta_{\T_i}(X, \y)]$. 
Now we can establish the connection between $\delta(X, \y)$ and $\delta_{\T_i}(X, \y)$ as follows.

\begin{lemm}
\label{lam:max}
If annotations of $\{\T_1, \cdots, \T_N\}$ could determine annotations of $\T_{all}$ without any ambiguity, \ie, $\forall\, \X \mbox{ and } \y, |\Y_{\T_1}(X, \y) \cap \cdots \cap \Y_{\T_N}(X, \y)| = 1$, we could have: $\forall \, X$ and $\y$,  the value of $\delta(X, \y)$ would be equal to $\max( \delta_{\T_1}(X, \y)), \cdots, \delta_{\T_N}(X, \y) )$.
\end{lemm}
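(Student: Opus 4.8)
The plan is to exploit that both $\delta(X,\y)$ and each $\delta_{\T_i}(X,\y)$ take values in $\{0,1\}$, so that the claimed identity $\delta(X,\y) = \max(\delta_{\T_1}(X,\y), \cdots, \delta_{\T_N}(X,\y))$ is equivalent to the logical statement
$$
\delta(X,\y) = 0 \iff \delta_{\T_i}(X,\y) = 0 \text{ for every } i.
$$
I would therefore fix an arbitrary input $X$ and gold sequence $\y$ and establish the two directions of this equivalence separately. Throughout, the hinge of the argument is the trivial membership $\y \in \Y_{\T_i}(X,\y)$, which holds for every $i$ because $\T_i(\y) = \T_i(\y)$; this is what lets the partial error rates ``see'' the gold sequence.

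For the forward direction ($\delta = 0 \Rightarrow$ all $\delta_{\T_i} = 0$), I would assume $\y$ is the strict unique maximizer of $\varphi(X,\cdot)$, i.e. $\varphi(X,\y) > \varphi(X,\hat\y)$ for every $\hat\y$ differing from $\y$. Fixing $i$, since $\y \in \Y_{\T_i}(X,\y) \subseteq \Y(X)$ the inner maximum $\max_{\hat\y \in \Y_{\T_i}(X,\y)} \varphi(X,\hat\y)$ equals $\varphi(X,\y)$. Every sequence in the complement $\Y(X) - \Y_{\T_i}(X,\y)$ necessarily differs from $\y$, so its score is strictly below $\varphi(X,\y)$, whence the complement maximum is strictly smaller than the inner maximum and $\delta_{\T_i}(X,\y) = 0$. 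Notably, this direction does not invoke the no-ambiguity hypothesis at all.

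For the backward direction (all $\delta_{\T_i} = 0 \Rightarrow \delta = 0$), I would argue about an arbitrary global maximizer $\y^\star$ of $\varphi(X,\cdot)$. If, for some $i$, $\y^\star$ lay in the complement $\Y(X) - \Y_{\T_i}(X,\y)$, then its score would be at most the complement maximum, which by $\delta_{\T_i}(X,\y) = 0$ is strictly below $\max_{\hat\y \in \Y_{\T_i}(X,\y)} \varphi(X,\hat\y)$, contradicting global optimality of $\y^\star$. Hence $\y^\star \in \Y_{\T_i}(X,\y)$ for every $i$, so $\y^\star \in \Y_{\T_1}(X,\y) \cap \cdots \cap \Y_{\T_N}(X,\y) = \{\y\}$ by the no-ambiguity assumption, forcing $\y^\star = \y$. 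Applying the identical complement argument to \emph{any} sequence attaining the maximal score shows it too must coincide with $\y$, so $\y$ is the strict unique maximizer and $\delta(X,\y) = 0$.

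The main obstacle I anticipate is the careful handling of ties at the top score: the definition of $\delta$ demands a \emph{strict} unique maximizer, yet a priori several label sequences could share the maximal value of $\varphi$. Resolving this is precisely where the no-ambiguity condition $\Y_{\T_1}(X,\y) \cap \cdots \cap \Y_{\T_N}(X,\y) = \{\y\}$ earns its keep, since it forces every top-scoring sequence into the intersection and hence to equal $\y$. I would thus be deliberate in applying the complement argument not merely to one chosen maximizer but to every sequence achieving the maximum, so that uniqueness — and not just optimality — is what is established.
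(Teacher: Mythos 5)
Your proposal is correct and follows essentially the same route as the paper: both directions hinge on the membership $\y \in \Y_{\T_i}(X,\y)$ for the forward implication and on the no-ambiguity condition forcing any global maximizer into the singleton intersection $\{\y\}$ for the converse (the paper phrases this converse as a proof by contradiction starting from $\delta(X,\y)=1$, but the content is identical). Your explicit care about ties among top-scoring sequences is a slight tightening of the paper's argument rather than a different idea.
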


\begin{proof}
If $\delta(X, \y) = 0$, we have $$\varphi(X, \y) > \max_{\hat{\y} \in \Y(X) - \y } \varphi(X, \hat{\y}).$$
Thus for $\forall \mbox{ and } \T_i , \y \in \Y_{\T_i}$, we have $\delta_{\T_i} (X, \y) = 0$. 
Then, we have
\begin{equation}
\max( \delta_{\T_1}(X, \y)), \cdots, \delta_{\T_N}(X, \y) ) = 0.
\label{eqn:max_e_0}
\end{equation} 
Now we proceed to prove that, if Equation~\ref{eqn:max_e_0} holds true, we have $\delta(X, \y) = 0$. For the sake of contradiction, we suppose $\exists X, \y$ that satisfies both $\delta(X, \y) = 1$ and Equation~\ref{eqn:max_e_0}. 
Thus, $\forall \, \T_i, \, \delta_{\T_i}(X, \y) = 0$.
Since $\delta(X, \y) = 1$, we  have 
$$
\forall \, \hat{\y} \in \Y(X), \varphi(X, \y^*) \geq \varphi(X, \hat{\y}),
$$ 
where $\y^* = \arg\max_{\hat{\y} \in \Y(X) - \y } \varphi(X, \hat{\y})$. 
Besides, since $
|\Y_{\T_1}(X, \y) \cap \cdots \cap \Y_{\T_N}(X, \y)| = 1,
$
we have $\exists \, \T_{\y^*} \mbox{ that }\, \y^* \not\in \Y_{\T_{\y^*}}(X, \y)$. 
Therefore, we have $\y^* \in \Y(X, \y) - \T_{\y^*}(X, \y)$.
Thus,
$$
\max_{\hat{\y} \in \Y_{\T_i}(X, \y)} \varphi(X, \hat{\y}) \leq  \varphi(X, \y^*) = \max_{\hat{\y} \in \Y(X) - \Y_{\T_i}(X, \y)} \varphi(X, \hat{\y}).
$$
Based on definition, we have $\delta_{\T_i}(X, Y) = 1$, which contradicts to our assumption that Equation 3 holds true.

Hence, if $\max( \delta_{\T_1}(X, \y)), \cdots, \delta_{\T_N}(X, \y) ) = 0$, $\delta(X, \y) = 0$.
Accordingly, $\forall \, X$ and $\y$, we have 
$$
\delta(X, \y) = \max( \delta_{\T_1}(X, \y)), \cdots, \delta_{\T_N}(X, \y) ).
$$
\end{proof}
\noindent
Now, we proceed to prove the Theorem~\ref{the:main}.
\begin{proof}
Since for $\forall\, \T_i \mbox{ we have } \y \in \Y_{\T_i}(X, \y)$, thus
\begin{align*}
\varphi(X, \y)  -  \max_{\hat{\y} \in \Y(X) - \y}  \varphi(X, \hat{\y}) \leq
\max_{\hat{\y} \in \Y_{\T_i}(X, \y)} \varphi(X, \hat{\y}) - \max_{\hat{\y} \in \Y(X) - \Y_{\T_i}(X, \y)} \varphi(X, \hat{\y})
\end{align*}
Based on the definition, we get $E_{\T_i} \leq E_{all}$ and 
$\frac{\sum_{i = 1}^N E_{\T_i}}{N} \leq E_{all}$.

Then, we prove the second inequality in Theorem~\ref{the:main}.
With Lemma~\ref{lam:max}, we know that, if annotations of $\{\T_1, \cdots, \T_N\}$ could determine annotations of $\T_{all}$ without any ambiguity, we would have $\delta(X, \y) = \max( \delta_{\T_1}(X, \y)), \cdots, \delta_{\T_N}(X, \y) )$. 
Therefore, we have
\begin{align*}
E_{all} &=  \mathbb{E}_{X, \y} [\delta(X, \y)] = \mathbb{E}_{X, \y} [\max( \delta_{\T_1}(X, \y)), \cdots, \delta_{\T_N}(X, \y) )] \\
&\leq \mathbb{E}_{X, \y} [\sum_{i=1}^N \delta_{\T_i}(X, \y))] =\sum_{i=1}^N \mathbb{E}_{X, \y} [\delta_{\T_i}(X, \y))] = \sum_{i=1}^N E_{\T_i}
\end{align*}
which is the second inequality in Theorem~\ref{the:main}.

\end{proof}

For a quick summary, we establish that,
if partially-annotated datasets $\{\T_i\}$ provide sufficient supervision (\ie, annotations for $\{\T_1, \cdots, \T_N\}$ could determine annotations for $\T_{all}$ without any ambiguity), $E_{all}$ on the test set can be bounded by $\{E_{\T_i}\}$ on the test set; and based on previous work~\cite{london2016stability}, we can bound $E_{\T_i}$ on the test set with $E_{\T_i}$ on $\D_i$. 
In other words, for a list of datasets, if their labeling schemas $\{ \T_1, \cdots, \T_N\}$ could determine the annotation for $\T_{all}$ without any ambiguity, 
partial annotations alone can be sufficient for sequence labeling model training. 

It is worth mentioning that our study is the first analyzing the potential of partially-typed datasets for sequence labeling and providing theoretical supports for learning with complementary but partially-typed datasets.
We now discuss its connection to the previous work, which studies this problem empirically.

\subsection{Connection to Existing Methods}

Due to the lack of guidance on theory design, many principles and methods have been leveraged to integrate partially-typed datasets (as introduced in Section~\ref{subsec:partial-related}). 
\citet{jie2019better} shows that EM-CRF or CRF with marginal likelihood training significantly outperforms other principles. 
Here, we show the connection between our theory and marginal likelihood training. 

\smallsection{Connection to Marginal Likelihood Training}
As introduced in Section~\ref{subsec:seq_learn}, the widely used objective function for conventional CRF models is the negative log likelihood:
\begin{align}
     - \log p(\y | X) = \log (\sum_{\hat{\y} \in \Y(X)} \varphi(X, \hat{\y})) - \log \varphi(X, \y) 
    \approx  \max_{\hat{\y} \in \Y(X)} \log\varphi(X, \hat{\y}) - \log \varphi(X, \y) \label{eqn:app_crf}
\end{align}

Equation~\ref{eqn:app_crf} can be viewed as an approximation to the sentence error rate (\ie, Equation~\ref{eqn:e_all}).
Specifically, if $\varphi(X, \y)$ has the largest value among all possible label sequences (the model output is correct), both $\delta(X, \y)$ and Equation~\ref{eqn:app_crf} would be zero; otherwise (the model output is wrong), $\delta(X, \y)$ would be one and Equation~\ref{eqn:app_crf} would be positive. 
Thus, the sentence-level error rate could be viewed as the $l_0$ norm of Equation~\ref{eqn:app_crf}, and the negative log likelihood can be viewed as an approximate of the sentence level error rate. 

Since $E_{all}$ could be bounded by $\sum_{\T_i} E_{\T_i}$ (Theorem~\ref{the:main}), it's intuitive to design the learning objective as minimizing an approximation of $\sum_{\T_i} E_{\T_i}$ for the complementary learning. 
Specifically, similar to Equation~\ref{eqn:app_crf}, $\mathbb{I}(.)$ in $E_{\T_i}$ is approximated as below
\begin{align*}
     &\max_{\hat{\y} \in \Y(X)} \log\varphi(X, \hat{\y}) - \max_{\hat{\y} \in \Y_{\T_i}(X, \y)} \log\varphi(X, \hat{\y}) \approx \log (\sum_{\hat{\y} \in \Y(X)} \varphi(X, \hat{\y})) - \log (\sum_{\hat{\y} \in \Y_{\T_i}(X, \y)} \varphi(X, \hat{\y})) 
\end{align*}
which is the objective for the marginal likelihood training~\cite{greenberg2018marginal,jie2019better} (as visualized in Fig~\ref{fig:fuzzy}).
Similar to Equation~\ref{eqn:crf}, it can be calculated efficiently with dynamic programming. 



\section{Experiments}
\label{sec:exp}

We further conduct experiments to compare fully-typed datasets with partially-typed ones.
We first introduce compared methods, then discuss the empirical results.

\subsection{Experiment Setting}

We generate simulated partially-typed datases from fully-typed datasets, which allows us to have consistent annotation quality and controllable annotation quantity.
Specifically, we first devide a bioNER dataset JNLPBA~\cite{crichton2017neural} into $n$ folds ($n=\{3,4,5\}$ in our experiments), preserve one type of entities in each split and mask other entities as \texttt{O}.
Statistics of the resulting dataset are summarized in Table~\ref{stats} and more details are included in the Appendix.

\begin{table}[t]
\begin{center}
\caption{Number of Sentences and Entities in the Simulated Dataset.}
\label{stats}
\begin{tabularx}{\columnwidth}{l *{10}{Y}}
\hline
 & \multicolumn{2}{c}{DNA} & \multicolumn{2}{c}{protein} & \multicolumn{2}{c}{cell-type} & \multicolumn{2}{c}{cell-line} & \multicolumn{2}{c}{RNA} \\\cmidrule{2-11}
 & Sent. & Entity & Sent. & Entity & Sent. & Entity & Sent. & Entity & Sent. & Entity \\
 \midrule
 3-split 
& 5571 & 7385
& 5571 & 16729
& 5571 & 4778 
& -- & -- & -- & --\\
\midrule
4-split 
&4178 & 5607
&4178 & 12720
&4179 & 3575
&4178 & 2495
& -- & --\\
\midrule
5-split
&3343 & 4509
&3342 & 10189
&3343 & 2855
&3342 & 1940
&3343 & 539\\
\bottomrule
\end{tabularx}
\end{center}
\vsaft
\end{table}


\subsection{Compared Methods}

We implement and conduct experiments with four types of methods.
It is worth mentioning that, all methods are based on the same base sequence labeling model as described in Sec~\ref{subsec:seq_learn}.



\smallsection{Naive Method (Concat)} 
We implement a naive method to demonstrate the difference between our problem to classical supervised learning. 
\textit{Concat} naively overlooks the label inconsistency problem and trains a model by directly combining different datasets. As each separate dataset can be viewed as an incompletely labeled corpus w.r.t. the union type space, \textit{Concat} introduces many false negatives into the training process, undermining the performance of the model.

\smallsection{Marginal Likelihood Training (Partial)}
As in Figure~\ref{fig:fuzzy}, there may exist multiple valid target sequences in a partially-typed dataset. 
Therefore, the marginal likelihood training is employed as the learning objective, which seeks to maximize the probability of all possible label sequences~\cite{greenberg2018marginal}:
$
\mathcal{L} = - \sum_{X, \y} \log (\sum_{\hat{\y} \in \Y_{\T_i}(X, \y)} p(\hat{\y} | X))
$. 
It can be efficiently calculated with dynamic programming.

By maximizing the probability of all label sequences that are consistent with partially-typed labels, the CRF model is allowed to coordinate the label spaces during model learning.
For example, in Figure~\ref{fig:fuzzy}, the marginal likelihood training would try to discriminate \texttt{PROTEIN} against other entities in the BC2GM dataset and discriminate \texttt{CHEMICAL} against other entities in the BC2CHEMD dataset.
This allows the model to take advantage of the positive labels from each dataset 
while over-penalize uncertain labels. 
As a result, the model spontaneously integrates supervision signals from all training sets, having the ability to identify entities of all previously seen types.
In the inference process, such model would extract entities of all types.

\begin{figure*}[t]
\includegraphics[width=\linewidth]{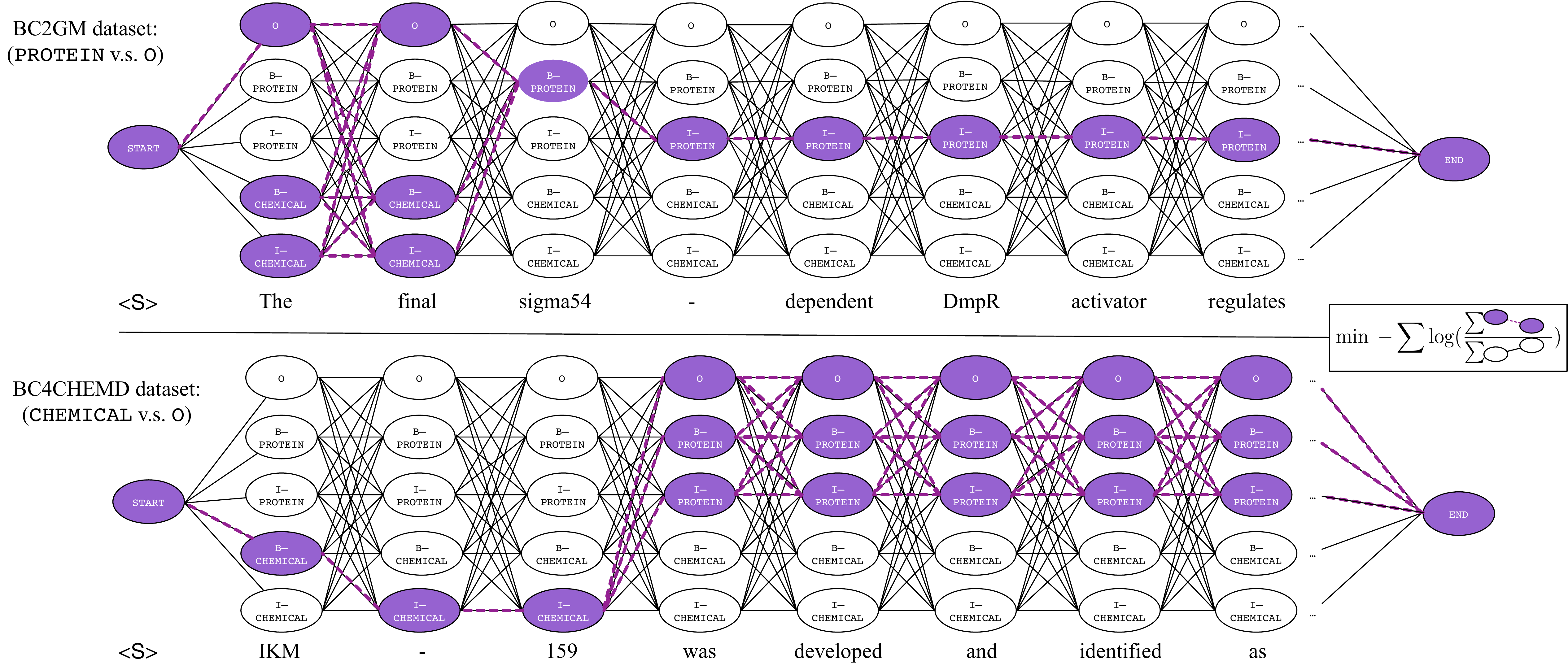}
\caption{Marginal Likelihood Training for CRFs. The BC2GM dataset is annotated with only \texttt{PROTEIN} entities; the BC4CHEMD dataset is annotated with only \texttt{CHEMICAL} entities.}
\label{fig:fuzzy}
\end{figure*}



\smallsection{Label Propagation (Propagate)}
We design a heuristic strategy to integrate partially-typed datasets by propagating labels. 
Specifically, we first train $N$ separate models, each from a different dataset.
Then, we use these models to annotate each other.
Thus, each dataset will thus have two types of labels: gold labels come from manual annotation and propagated labels come from models trained with other datasets.
\textit{Propagate} merges original annotations with propagated labels by treating both as the target sequences and the marginal likelihood as the objective for training. 

\smallsection{Conventional Supervised Method (Standard$^{\spadesuit}$)}
We also construct conventional supervised sequence labeling model with sampled datasets. 
Specifically, \textit{Standard$^{\spadesuit}$($S$ sent. w. $N$ types)} is trained on a fully-typed dataset which has roughly the same amount of annotated entities with $N$-split but less sentences.
\textit{Standard$^{\spadesuit}$(all sent. w. $N$ types)} is to train on a fully-typed dataset which has roughly the same amount of sentences with $N$-split but more annotated entities. 
\textit{1-Type} is to train a single-type NER with the access to all annotations of a single type of entities. 

\subsection{Performance Comparison}

\begin{table}[t]
\caption{Performance Comparison. \textit{Standard$^{\spadesuit}$ ($S$ sent. w. $N$ types)} is a conventional sequence labeling model trained with $s$ sentences (fully annotated with $n$ types of entities).}
\begin{center}
\scalebox{0.95}{
\begin{tabularx}{\columnwidth}{c c *{6}{Y}}
\toprule        
             \multirow{3}{*}{Dataset} & \multirow{3}{*}{Method}  &   \multicolumn{5}{c}{Per-type $F_1$} & \multirow{3}{*}{\begin{tabular}[c]{@{}c@{}}Micro \\ $F_1$\end{tabular}}      \\ \cmidrule{3-7}
                   &                            & cell-type & DNA   & protein & cell-line & RNA   &  \\ \midrule\midrule
\multirow{3}{*}{\begin{tabular}[c]{@{}c@{}} 3-split \\ ($\sim$5571 sent./type)  \end{tabular}} 
& \textit{{Concat}}      & 28.24     & 18.72 & 32.95   & --        & --    & 30.16   \\
                   & \textit{{Propagate}}  & 71.10     & 71.32 &  75.53  & --        & --    & 73.85        \\
                   & \textit{{Partial}}    & \textbf{75.41}     & \textbf{72.55} & \textbf{76.41}   & --        & --    & \textbf{75.58}   \\
                   \cmidrule{1-8}
\multicolumn{2}{c}{\textit{Standard$^{\spadesuit}$ (5571 sent. w. 3 types)}}  & 72.86     &  68.09 &  73.20  & --        & --    & 69.62   \\
\multicolumn{2}{c}{\textit{Standard$^{\spadesuit}$ (all sent. w. 3 types)}}    & 75.64     &  70.69 &  76.37  & --        & --    & 75.47   \\ \midrule\midrule
\multirow{3}{*}{\begin{tabular}[c]{@{}c@{}} 4-split \\ ($\sim$4178 sent./type)\end{tabular}}
& \textit{{Concat}}   & 17.32     & 25.02 & 24.78   & 16.41     & --    & 22.74   \\
                   & \textit{{Propagate}}  & 69.81     & 67.52 & 73.78   & 55.54     & --    & 70.94        \\
                   & \textit{{Partial}}    & \textbf{76.13}     & \textbf{70.55} & \textbf{75.23}   & \textbf{61.82}     & --    & \textbf{73.83}   \\\cmidrule{1-8}
\multicolumn{2}{c}{\textit{Standard$^{\spadesuit}$ (4178 sent. w. 4 types)}}   & 72.30     &  66.09 &  72.19  & 57.78         & --    & 70.09   \\
\multicolumn{2}{c}{\textit{Standard$^{\spadesuit}$ (all sent. w. 4 types)}}    & 78.00     & 70.20 & 76.19   & 62.26     & --    & 74.96   \\ \midrule\midrule
\multirow{3}{*}{\begin{tabular}[c]{@{}c@{}} 5-split \\ ($\sim$3343 sent./type)\end{tabular}}& \textit{{Concat}}   & 6.99      & 24.58 & 19.41   & 9.68      & 13.95 & 16.89   \\
                   & \textit{{Propagate}}  & 68.66     & 63.67 & 70.58   & 56.77     & 68.34 & 68.37        \\
                   & \textit{{Partial}}    & \textbf{73.72}     & \textbf{71.34} & \textbf{74.67}   & \textbf{60.87}     & \textbf{72.65} & \textbf{73.00}   \\\cmidrule{1-8}
\multicolumn{2}{c}{\textit{Standard$^{\spadesuit}$ (3343 sent.  w. 4 types)}}   & 74.32     & 66.67 &  71.15  & 59.02       & 63.96   & 70.49   \\
\multicolumn{2}{c}{\textit{Standard$^{\spadesuit}$ (all sent.  w. 5 types)}}   & 76.71     & 70.64 & 76.31   & 61.08     & 73.07 & 74.75   \\ 
\bottomrule
\end{tabularx}}
\end{center}
\label{tab:perf345}
\vsaft
\end{table}
As shown in Table \ref{tab:perf345}, the \textit{Partial} model significantly outperforms \textit{Concat} and \textit{Propagate} in both per-type and micro $F_1$.
Also, the performance of partially-typed datasets keeps dropping as the number of types grows (since larger $N$ results in less annotated entities per type). 
Among these methods, \textit{Concat} performs much worse than all other methods due to the effect of false negatives, which verifies the importance of properly integrating datasets.
\textit{Propagate} adds more constraints to the target label set of \textit{Partial} and achieves worse performance (\textit{Partial} achieves $2.34\%$, $4.07\%$, $6.77\%$ gain over \textit{Propagate} for $n=3,4,5$, respectively). 
Such result indicate that it is more profitable not to constraints on the target label space and allow the algorithm to infer the labels in an adaptive manner.

To examine the efficacy of partially-typed datasets, we compare the performance of \textit{Partial} and \textit{Standard$^{\spadesuit}$}. 
\textit{Partial} obtains much better results than \textit{Standard$^{\spadesuit}$ ($S$ sent. w. $N$ types)}, which has less sentences and roughly the same number of annotated entities.
At the same time, \textit{Partial} obtains comparable performance than \textit{Standard$^{\spadesuit}$(all sent, $N$ types)}, which has with the same amount of sentences and more annotated entities (comparing to \textit{Standard$^{\spadesuit}$(all sent, $N$ types)}, \textit{Partial} is only trained with $\sim1/N$ annotated entities).
Overall, the result verifies the potential of partially-typed datasets and accords with our derivation and matches our theoretical results.

\section{Related Work}
\label{sec:related}

There exist two aspects of related work regarding the
topic here, which are sequence labeling and training with incomplete annotations.

\subsection{Sequence Labeling}

Most recent approaches to NER have been characterized by the sequence labeling model, i.e., assigning a label to each word in the sentence.
Traditional methods leverage handcrafted features to capture linguistic signals and employ conditional random fields (CRF) to model label dependencies~\cite{finkel2005incorporating,settles2004biomedical,leaman2008banner}.
Many efforts have been made to leverage neural networks for representation learning and free domain experts from handcrafting features~\cite{huang2015bidirectional,chiu2016named,lample2016neural,ma2016end, liu2017empower}.
Recent advances demonstrated the great potential of natural language models and extensive pre-training~\cite{Peters2018DeepCW,akbik2018coling,Devlin2019BERTPO}.
However, most methods only leverage one dataset for training, and doesn't have the ability to handle the inconsistency among datasets. 

\subsection{Training with Incomplete Annotations}
\label{subsec:partial-related}

As aforementioned, special handlings are needed to integrate different datasets. 
To the best of our knowledge, \citet{scheffer2001active} is the first to study the problem of using partially labelled datasets for learning information extraction models.
\citet{mukherjee2004taming} further incorporates ontologies into HMMs to model incomplete annotations. 
\citet{bui2008learning} extends CRFs to handle this problem, while \citet{yu2009learning} leverages structured SVMs.
\citet{carlson2009learning} conduct training with partial perceptron, which only considers typed entities but not none-entity words; transductive perceptron is leveraged to infer labels for all words~\cite{fernandes2011learning}. 
After neural networks have demonstrated their ability to substantially increase the performance of NER models, attempts have been made to train neural models with partially-typed NER dataset~\cite{Giannakopoulos2017UnsupervisedAT,shang2018learning}.
\citet{greenberg2018marginal} studies the problem of using multiple partially-typed NER dataset to conduct training.
More and more attentions have been attracted to further improve the model performance~\cite{jie2019better,beryozkin2019joint,huang2019learning}. 
Our work aims to answer the question whether models trained with partially-typed NER datasets can achieve comparable performances with ones trained with fully-typed NER datasets and how to integrate partially-typed NER datasets.

\section{Conlcusion}
\label{sec:conclu}

We study the potential of partially-typed datasets for NER training. 
Specifically, we theoretically establish that the model performance evaluated with full annotations could be bounded by those with partial annotations.
Besides, we reveal the connection from our derivation to marginal likelihood training, which provides guidance on algorithm design. 
Furthermore we conduct empirical studies to verify our intuition and find experiments match our theoretical results.

There are many interesting directions for future work. 
For example, for even the same entity type, different real-world datasets may have different definitions or annotation guidelines.
It would be beneficial to allow models self-adapt to different label spaces and handle the disparity among different annotation guidelines. 
	
\bibliographystyle{plainnat}
\bibliography{cited}

\begin{thebibliography}{30}
\providecommand{\natexlab}[1]{#1}
\providecommand{\url}[1]{\texttt{#1}}
\expandafter\ifx\csname urlstyle\endcsname\relax
  \providecommand{\doi}[1]{doi: #1}\else
  \providecommand{\doi}{doi: \begingroup \urlstyle{rm}\Url}\fi

\bibitem[Akbik et~al.(2018)Akbik, Blythe, and Vollgraf]{akbik2018coling}
Alan Akbik, Duncan Blythe, and Roland Vollgraf.
\newblock Contextual string embeddings for sequence labeling.
\newblock In \emph{{COLING}}, 2018.

\bibitem[Beryozkin et~al.(2019)Beryozkin, Drori, Gilon, Hartman, and
  Szpektor]{beryozkin2019joint}
Genady Beryozkin, Yoel Drori, Oren Gilon, Tzvika Hartman, and Idan Szpektor.
\newblock A joint named-entity recognizer for heterogeneous tag-setsusing a tag
  hierarchy.
\newblock In \emph{ACL}, 2019.

\bibitem[Bui et~al.(2008)Bui, Phung, Venkatesh, et~al.]{bui2008learning}
Hung~H Bui, Dinh~Q Phung, Svetha Venkatesh, et~al.
\newblock Learning discriminative sequence models from partially labelled data
  for activity recognition.
\newblock In \emph{Pacific Rim International Conference on Artificial
  Intelligence}, 2008.

\bibitem[Bunescu and Mooney(2005)]{bunescu2005shortest}
Razvan~C Bunescu and Raymond~J Mooney.
\newblock A shortest path dependency kernel for relation extraction.
\newblock In \emph{EMNLP}, 2005.

\bibitem[Carlson et~al.(2009)Carlson, Gaffney, and Vasile]{carlson2009learning}
Andrew Carlson, Scott Gaffney, and Flavian Vasile.
\newblock Learning a named entity tagger from gazetteers with the partial
  perceptron.
\newblock In \emph{AAAI Spring Symposium: Learning by Reading and Learning to
  Read}, 2009.

\bibitem[Chiu and Nichols(2016)]{chiu2016named}
Jason~PC Chiu and Eric Nichols.
\newblock Named entity recognition with bidirectional lstm-cnns.
\newblock \emph{TACL}, 2016.

\bibitem[Crichton et~al.(2017)Crichton, Pyysalo, Chiu, and
  Korhonen]{crichton2017neural}
Gamal Crichton, Sampo Pyysalo, Billy Chiu, and Anna Korhonen.
\newblock A neural network multi-task learning approach to biomedical named
  entity recognition.
\newblock \emph{BMC bioinformatics}, 2017.

\bibitem[Devlin et~al.(2019)Devlin, Chang, Lee, and
  Toutanova]{Devlin2019BERTPO}
Jacob Devlin, Ming-Wei Chang, Kenton Lee, and Kristina Toutanova.
\newblock Bert: Pre-training of deep bidirectional transformers for language
  understanding.
\newblock In \emph{NAACL-HLT}, 2019.

\bibitem[Fernandes and Brefeld(2011)]{fernandes2011learning}
Eraldo~R Fernandes and Ulf Brefeld.
\newblock Learning from partially annotated sequences.
\newblock In \emph{ECMLPKDD}, 2011.

\bibitem[Finkel et~al.(2005)Finkel, Grenager, and
  Manning]{finkel2005incorporating}
J.~R. Finkel, T.~Grenager, and C.~Manning.
\newblock Incorporating non-local information into information extraction
  systems by {G}ibbs sampling.
\newblock In \emph{ACL}, 2005.

\bibitem[Giannakopoulos et~al.(2017)Giannakopoulos, Musat, Hossmann, and
  Baeriswyl]{Giannakopoulos2017UnsupervisedAT}
Athanasios Giannakopoulos, Claudiu Musat, Andreea Hossmann, and Michael
  Baeriswyl.
\newblock Unsupervised aspect term extraction with b-lstm \& crf using
  automatically labelled datasets.
\newblock In \emph{WASSA@EMNLP}, 2017.

\bibitem[Greenberg et~al.(2018)Greenberg, Bansal, Verga, and
  McCallum]{greenberg2018marginal}
Nathan Greenberg, Trapit Bansal, Patrick Verga, and Andrew McCallum.
\newblock Marginal likelihood training of bilstm-crf for biomedical named
  entity recognition from disjoint label sets.
\newblock In \emph{EMNLP}, 2018.

\bibitem[Guo et~al.(2009)Guo, Xu, Cheng, and Li]{guo2009named}
Jiafeng Guo, Gu~Xu, Xueqi Cheng, and Hang Li.
\newblock Named entity recognition in query.
\newblock In \emph{SIGIR}, 2009.

\bibitem[Huang et~al.(2019)Huang, Dong, Boschee, and Peng]{huang2019learning}
Xiao Huang, Li~Dong, Elizabeth Boschee, and Nanyun Peng.
\newblock Learning a unified named entity tagger from multiple partially
  annotated corpora for efficient adaptation.
\newblock In \emph{CoNLL}, 2019.

\bibitem[Huang et~al.(2015)Huang, Xu, and Yu]{huang2015bidirectional}
Zhiheng Huang, Wei Xu, and Kai Yu.
\newblock Bidirectional lstm-crf models for sequence tagging.
\newblock \emph{arXiv preprint arXiv:1508.01991}, 2015.

\bibitem[Jie et~al.(2019)Jie, Xie, Lu, Ding, and Li]{jie2019better}
Zhanming Jie, Pengjun Xie, Wei Lu, Ruixue Ding, and Linlin Li.
\newblock Better modeling of incomplete annotations for named entity
  recognition.
\newblock In \emph{NAACL-HLT}, 2019.

\bibitem[Lample et~al.(2016)Lample, Ballesteros, Subramanian, Kawakami, and
  Dyer]{lample2016neural}
Guillaume Lample, Miguel Ballesteros, Sandeep Subramanian, Kazuya Kawakami, and
  Chris Dyer.
\newblock Neural architectures for named entity recognition.
\newblock In \emph{NAACL-HLT}, 2016.

\bibitem[Leaman et~al.(2008)Leaman, Gonzalez, et~al.]{leaman2008banner}
Robert Leaman, Graciela Gonzalez, et~al.
\newblock Banner: an executable survey of advances in biomedical named entity
  recognition.
\newblock In \emph{Pacific Symposium on Biocomputing}, 2008.

\bibitem[Liu et~al.(2018)Liu, Shang, Xu, Ren, Gui, Peng, and
  Han]{liu2017empower}
Liyuan Liu, Jingbo Shang, Frank Xu, Xiang Ren, Huan Gui, Jian Peng, and Jiawei
  Han.
\newblock Empower sequence labeling with task-aware neural language model.
\newblock In \emph{AAAI}, 2018.

\bibitem[London et~al.(2016)London, Huang, and Getoor]{london2016stability}
Ben London, Bert Huang, and Lise Getoor.
\newblock Stability and generalization in structured prediction.
\newblock \emph{JMLR}, 2016.

\bibitem[Ma and Hovy(2016)]{ma2016end}
Xuezhe Ma and Eduard Hovy.
\newblock End-to-end sequence labeling via bi-directional lstm-cnns-crf.
\newblock In \emph{ACL}, 2016.

\bibitem[Moen and Ananiadou(2013)]{moen2013distributional}
SPFGH Moen and Tapio Salakoski2~Sophia Ananiadou.
\newblock Distributional semantics resources for biomedical text processing.
\newblock In \emph{Proceedings of the 5th International Symposium on Languages
  in Biology and Medicine, Tokyo, Japan}, pages 39--43, 2013.

\bibitem[Mukherjee and Ramakrishnan(2004)]{mukherjee2004taming}
Saikat Mukherjee and IV~Ramakrishnan.
\newblock Taming the unstructured: Creating structured content from partially
  labeled schematic text sequences.
\newblock In \emph{OTM Confederated International Conferences "On the Move to
  Meaningful Internet Systems"}, 2004.

\bibitem[Peters et~al.(2018)Peters, Neumann, Iyyer, Gardner, Clark, Lee, and
  Zettlemoyer]{Peters2018DeepCW}
Matthew~E. Peters, Mark Neumann, Mohit Iyyer, Matt Gardner, Christopher Clark,
  Kenton Lee, and Luke Zettlemoyer.
\newblock Deep contextualized word representations.
\newblock In \emph{NAACL-HLT}, 2018.

\bibitem[Ratinov and Roth(2009)]{ratinov2009design}
Lev Ratinov and Dan Roth.
\newblock Design challenges and misconceptions in named entity recognition.
\newblock In \emph{CoNLL}, 2009.

\bibitem[Reimers and Gurevych(2017)]{reimers2017optimal}
Nils Reimers and Iryna Gurevych.
\newblock Optimal hyperparameters for deep lstm-networks for sequence labeling
  tasks.
\newblock \emph{CoRR}, abs/1707.06799, 2017.

\bibitem[Scheffer et~al.(2001)Scheffer, Decomain, and
  Wrobel]{scheffer2001active}
Tobias Scheffer, Christian Decomain, and Stefan Wrobel.
\newblock Active hidden markov models for information extraction.
\newblock In \emph{International Symposium on Intelligent Data Analysis}, 2001.

\bibitem[Settles(2004)]{settles2004biomedical}
Burr Settles.
\newblock Biomedical named entity recognition using conditional random fields
  and rich feature sets.
\newblock In \emph{Proceedings of the international joint workshop on natural
  language processing in biomedicine and its applications}, 2004.

\bibitem[Shang et~al.(2018)Shang, Liu, Gu, Ren, Ren, and
  Han]{shang2018learning}
Jingbo Shang, Liyuan Liu, Xiaotao Gu, Xiang Ren, Teng Ren, and Jiawei Han.
\newblock Learning named entity tagger using domain-specific dictionary.
\newblock In \emph{EMNLP}, 2018.

\bibitem[Yu and Joachims(2009)]{yu2009learning}
Chun-Nam~John Yu and Thorsten Joachims.
\newblock Learning structural svms with latent variables.
\newblock In \emph{ICML}, 2009.

\end{thebibliography}

\newpage
\appendix

\section*{Model Training and Hyper-parameters}
We choose hyper-parameters based on the previous study \cite{reimers2017optimal} and use the chosen values for all methods.
Specifically, we use 30-dimension character embeddings and 200-dimension pre-trained word embeddings \cite{moen2013distributional}. Both LSTMs are bi-directional.
Character-level LSTMs are set to be one-layered with 64-dimension hidden states in each direction; word-level ones are set to be two-layer with 50-dimension hidden states in each direction.
Dropout with ratio 0.25 is applied to the output of each layer.
Stochastic gradient descent with momentum is employed as the optimization algorithm, and the batch size, momentum and learning rate are set to 32, 0.9 and $\eta_t = \frac{\eta_0}{1 + \rho t}$ respectively.
Here $\eta_0 = 0.015$ is the initial learning rate and $\rho = 0.05$ is the decay ratio.
For better stability, we set the gradient clipping threshold to 5.

\section*{Evaluation Metrics and Data Source}
We use entity-level $F_1$ as the evaluation metric. \textbf{Micro $F_1$} is computed over all types by counting the total true positives, false negatives and false positives. \textbf{Per-type $F_1$} is calculated by counts of each type. 
All datasets we use are collected by \cite{crichton2017neural}, can be downloaded from Github\footnote{https://github.com/cambridgeltl/MTL-Bioinformatics-2016}, and the original train, development and test splits are adopted.

\end{document}